\newtheorem{theorem}{Theorem}
\newtheorem{remark}{Remark}
\newtheorem{lemma}{Lemma}
\theoremstyle{definition}
\newtheorem{definition}{Definition}
\DeclareMathOperator{\argmin}{argmin}
\DeclareMathOperator{\tr}{Tr}
\newcommand{\RR}{\mathbb{R}}
\newcommand{\EE}{\mathbb{E}}
\newcommand{\ZZ}{\mathbb{Z}}
\newcommand{\bx}{\mathbf{x}}
\newcommand{\bp}{\mathbf{p}}
\newcommand{\by}{\bm{y}}
\newcommand{\bi}{\begin{enumerate}}
\newcommand{\ei}{\end{enumerate}}
\title{Beyond the Quadratic Approximation: the Multiscale Structure of Neural Network Loss Landscapes}
\author[1]{Chao Ma}
\author[2]{Daniel Kunin}
\author[3]{Lei Wu}
\author[1]{Lexing Ying}
\affil[1]{Department of Mathematics, Stanford University}
\affil[2]{Institute for Computational and Mathematical Engineering, Stanford University}
\affil[3]{School of Mathematical Sciences, Peking University}
\date{\today}
\begin{document}

\maketitle 

\begin{abstract}
A quadratic approximation of neural network loss landscapes has been extensively used to study the optimization process of these networks.
Though, it usually holds in a very small neighborhood of the minimum, it cannot explain many phenomena observed during the optimization process. In this work,
we study the structure of neural network loss functions and its implication on optimization in a
region beyond the reach of a good quadratic approximation.  Numerically, we observe that neural
network loss functions possesses a {\it multiscale structure}, manifested in two ways: (1) in a
neighborhood of minima, the loss mixes a continuum of scales and grows subquadratically, and (2) in
a larger region, the loss shows several separate scales clearly. Using the subquadratic growth, we
are able to explain the Edge of Stability phenomenon~\cite{cohen2021gradient} observed for the gradient
descent (GD) method. Using the separate scales, we explain the working mechanism of learning rate
decay by simple examples. Finally, we study the origin of the multiscale structure and propose that
the non-convexity of the models and the non-uniformity of training data is one of the causes. By constructing a two-layer neural network
problem we show that training data with different magnitudes give rise to different scales of the
loss function, producing subquadratic growth and multiple separate scales.
\end{abstract}


\section{Introduction}
Despite the well-known nonconvexity of neural network's loss functions, utilizing local quadratic approximations around (global or local) minimum has been a fruitful approach to study the optimization behaviors for deep neural networks. For example, global convergence of the gradient descent (GD) method can be established for many neural network models in the so-called ``lazy training'' (or neural tangent kernel (NTK)) regime, where the training trajectory stays within a region with good quadratic approximation~\cite{Jacot2018Neural,du2018gradient,ma2019comparative}. Additionally, quadratic approximations around global minima can be used to explain the preference of stochastic gradient descent (SGD) for flat minima~\cite{wu2018sgd,dai2018towards} and certain properties of the limiting dynamics~\cite{kunin2021limiting}.


However, regardless of the theoretical simplicity brought by local quadratic approximation, empirical study has shown that the effect of higher-order information is far from negligible in most stages of the training and for most reasonable choices of hyperparameters. One reflection of the important role played by non-quadratic terms is the Edge of Stability (EoS) phenomenon~\cite{cohen2021gradient}. The EoS phenomenon shows that GD can always find and stay in the sharpest region that it can be stable. As a sharp contrast, in a quadratic landscape GD either converges or blows up in most cases. 
Another relevant observation is the effect of learning rate decay. After a learning rate decay, the trajectory changes its moving direction and converges to a different solution. This cannot be explained by a quadratic approximation of the loss. 
Including non-quadratic and nonconvex information of the loss function into the study of neural network optimization is necessary to acquire more realistic understanding of the behavior of the optimizers, even locally around the minimum.

In this work, motivated by the two problems mentioned above, we study the behavior of optimizers on neural network loss functions beyond local quadratic approximation. We start from empirical observations by visualizing the loss landscape around the training trajectory. Then, from the observations we extract relevant simplified models and theoretically study the optimization dynamics on these simplified problems. Specifically, we obtain two typical observations: (1) around global minimum, the loss grows slower than a quadratic function, which we name the \textbf{subquadratic growth}; (2) in a larger region, the loss function shows distinct scales, which we name the \textbf{separate scales structure}.
For the former, we propose to take a subquadratic function, a function that gets flatter when the parameters get farther from the minimum, as local approximation of the loss function. We study the behavior of GD minimizing this function and explain the mechanism behind the EoS phenomenon. We also consider a minima manifold with subquadratic landscape in the directions that are orthogonal to the manifold, and study the motion of the GD along the manifold driven by flatness after reaching EoS. For the latter observation (the separate scales structure), we consider a landscape with several valleys in different scales and explain the working mechanism of learning rate decay. 
Our theoretical studies, though not directly conducted on real neural network loss functions, help us build insights on what is happening during the training process. 

In addition to characterizing the optimization behavior, we are also concerned with the origin of the observed properties of neural network loss functions. We understand both subquadratic growth and separate scales as manifestation of multiscale structures---a continuum of scales for subquadratic growth and finite scales for separate scales structure. 
By a construction, we show that the multiscale structure can be caused by the non-convexity of the models and the non-uniformity of the training data. Our construction is simple with a two-layer neural network model with only a few neurons. Despite its simplicity, it can already reveal the origin of complicated loss landscapes for neural network models. It also justifies that the simplified problems studied in our theoretical analysis are strongly connected with real neural network loss functions. 

As a summary, our contributions are:
\begin{enumerate}
\item We visualize the loss landscape of neural networks in a region that cannot be approximated by the second-order Taylor polynomial at the minimum. We observe the multiscale structure of the loss functions, exhibited in two ways: subquadratic growth near minima and separate scales in larger regions. 

\item Using the subquadratic growth of the loss functions, we theoretically explain the edge of stability phenomenon of GD. 

\item By the separate scales structure of the loss functions, we provide detailed understandings for the working principle and necessity of learning rate decays during the optimization process, even for the deterministic GD algorithm.

\item We give a simple, yet neural network relevant, construction in which both the subquadratic behavior and separate scales structure happens for the loss landscape. The construction shows that such properties can be caused by the non-uniformity of the training data. 
\end{enumerate}

\paragraph{Related works.}
Many works that study the optimization behavior of neural networks resort to a local quadratic approximation of the loss function. This approach is equivalent with linearizing the optimization dynamics around the minima, or fixing the second-order derivatives. One line of works uses this idea to study the minima selection effect of optimizers by analyzing their linear stability in the quadratic approximation~\cite{wu2018sgd,giladi2019stability,ma2021linear}. 
Another notable series of works prove convergence of the GD dynamics for highly over-parameterized neural networks using the fact that the initialization is already in a region with good quadratic approximation~\cite{Jacot2018Neural,du2018gradient,allen2019convergence,chizat2019lazy,ma2019comparative,zou2020gradient}. This technique is usually referred to as lazy training~\cite{chizat2019lazy} or Neural Tangent Kernel (NTK)~\cite{Jacot2018Neural}. Besides direct analysis on (approximately) quadratic landscape, the Hessian of the loss function is widely used to study and characterize the landscape. For example, eigenvalues of the Hessian are used to measure the flatness of minima~\cite{dinh2017sharp}. It is also used to study the escaping of SGD from local minima~\cite{dai2018towards}.

The mechanism of SGD's exploration among different minima is made clear in the recent work~\cite{li2021happens}, which characterizes the movement of SGD iterators along the minima manifold. This picture of exploration along minima manifolds suits the neural network problem better than the exploration among isolated minima. Prior to~\cite{li2021happens}, similar analysis has been conduct for SGD with label noise~\cite{damian2021label,blanc2020implicit}. In this paper, we also analyze the motion of optimizers along the minima manifold (see Section~\ref{ssec:manifold}). Our theory is essentially different from these works, because we consider deterministic GD rather than SGD, and the motion along the manifold in our case is driven by an interaction of subquadratic growth and changing flatness, rather than the SGD noise.

The optimization on mulitscale objective function has also been studied. For example, in~\cite{kong2020stochasticity}, a diffusion effect was derived from deterministic gradient descent due to the small scales of objective functions. In the very recent work~\cite{ahn2022understanding}, the authors generalize the edge of stability phenomenon into the concept ``unstable convergence'', which happens when the objective function is complicated. Some examples studies therein is similar to the ones we study in this paper. 


\paragraph{Organization of the paper.}
The rest of the paper is organized as follows: In Section~\ref{sec:obs}, we discuss two empirical observations that show rich behaviors of GD/SGD on neural network loss functions, that cannot be easily explained by quadratic approximation. Then, in Section~\ref{sec:exp_subquadratic} we visualize the loss landscape and summarize two aspects of the multiscale structures of the loss---subquadratic growth and separate scales---that may help explain the phenomena in Section~\ref{sec:obs}. We theoretically explain the edge of stability phenomenon using the subquadratic growth in Section~\ref{sec:flattening}, and discuss how separate scales structure can help understand the behavior of learning rate decay in Section~\ref{sec:multiscale}. In Section~\ref{sec:construction}, we study the origin of the multiscale structure and construct simple examples showing the important role played by non-uniform training data. Finally, a summary and conclusions are given in Section~\ref{sec:summary}.


\section{Two empirical observations}\label{sec:obs}
In this section, we discuss two empirical observations that cannot be well explained by analyzing the optimizer on a quadratic approximation of the loss function.

\subsection{The edge of stability phenomenon}
The edge of stability (EoS) phenomenon is discussed in detail in~\cite{cohen2021gradient}. It is also observed in~\cite{wu2018sgd}. The EoS states that when GD is used to train neural networks, the sharpness at the iterator (measured by the largest eigenvalue of the Hessian) tends to increase until it arrives at $2/\eta$, where $\eta$ is the learning rate. Note that $2/\eta$ is the largest sharpness that GD can be stable assuming a quadratic loss landscape. Surprisingly, even after the sharpness stabilizes, the training loss keeps decreasing. We show one example in Figure~\ref{fig:eos} (left). Extensive experiments are done in~\cite{cohen2021gradient}. 

The EoS cannot be explained on quadratic loss functions. On a quadratic loss, GD either converges or blows up exponentially fast, unless the learning rate is exactly $2/\lambda$, where $\lambda$ is the largest eigenvalue of the Hessian. In later sections, we observe that the EoS is caused by loss landscapes that grow slower than quadratic functions around the minimum, which we call {\it subquadratic growth}. We then theoretically study how EoS happens, and why the loss value keeps decreasing after EoS, on simplified landscapes.

\begin{figure}
\centering
\includegraphics[width=0.48\textwidth, height=0.3\textwidth]{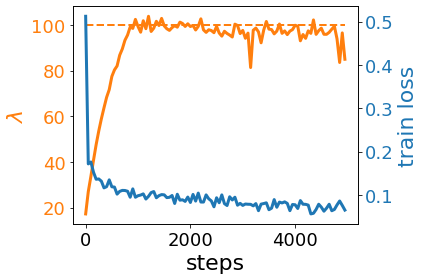}
 \includegraphics[width=0.45\textwidth, height=0.3\textwidth]{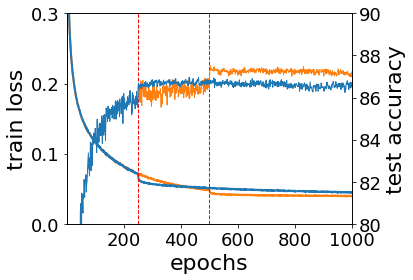}
\caption{{\bf (left)} One example of the edge of stability. We train a three-layer fully-connected neural network on a subset of CIFAR10. The orange curve shows that the sharpness of the landscape at the iterator first increases, and then stabilizes around $2/\eta$ (the dashed orange line), where $\eta$ is the learning rate. The blue curve shows the training loss keeps decreasing after the edge of stability is achieved. {\bf (right)} Training loss and test accuracy curves for two experiments with learning rate decay at different times. The blue curves show an experiment with LRD at epoch 250 (the left red vertical line), and the orange curves show an experiment with LRD at epoch 500 (the right red vertical line). The two experiments have the same initialization. The learning rate is $0.1$ initially and dropped to $0.01$.}
    \label{fig:eos}
\end{figure}

\subsection{The effect of learning rate decay}
Another observation that cannot be explained by quadratic approximation is the effect of learning
rate decay (LRD). LRD not only helps find parameters
with lower training loss, but also benefits generalization if used at a proper time. In
Figure~\ref{fig:eos} (right), we show that doing LRD later gives better generalization performance,
even though in both experiments the learning rate is decayed after the test accuracy is nearly stable and
increasing very slowly. The figure also shows that the training loss decreases very slowly
after decay. 

The explanation of the phenomena shown in LRD is beyond the reach of quadratic approximation and relies on more complicated structures of the loss. For quadratic loss function, LRD makes convergence slower, but the iterators will finally converge to a same solution and show the same generalization performance. In this paper, we will explain these observations using the separate scales structure of the loss functions.

\section{Loss landscape around training trajectory}\label{sec:exp_subquadratic}
In this section, we visualize the landscape of neural network's loss functions around the training trajectory. We observe the subquadratic growth and separate scales phenomena of the loss functions. Both these characteristics are aspects of the multiscale structure of the loss function---one with a continuum of scales and one with finite scales.

It is important to note that neural network's loss landscape possesses very rich behaviors, and almost any curvature can be found somewhere in the parameter space~\cite{skorokhodov2019loss}. In this work, we are only concerned with the loss curvature around the trajectory of SGD or GD. It is widely known that these optimizers only explore a very confined but important region of the whole parameter space. 

\subsection{The subquadratic growth}
In the experiments shown in Figure~\ref{fig:flattening}, we train neural networks using GD until the loss stops decreasing, or decreases very slowly, in which case we suppose GD starts oscillating around some minima. Then, we pick a point on the GD trajectory and visualize the ``gradient direction loss landscape'' around this point---the loss landscape along the line going through this point and towards the gradient direction at this point. The gradient direction landscape is important because it is the landscape that GD sees. Experiments are conducted on VGG network, ResNet, and DenseNet, on CIFAR10 and CIFAR100 datasets. The results for gradient direction landscapes are shown in the first row of Figure~\ref{fig:flattening}. The figures show that around the minimum the gradient direction landscape is convex and grows slower than quadratic functions. This subquadratic growth is verified by the second-order finite differences shown in the second row of Figure~\ref{fig:flattening}.

\begin{figure}[h!]
    \centering
    \includegraphics[width=0.32\textwidth,height=0.25\textwidth]{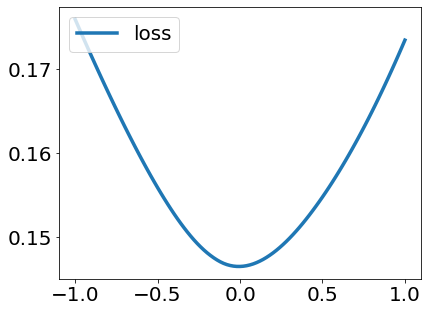}
    \includegraphics[width=0.32\textwidth,height=0.25\textwidth]{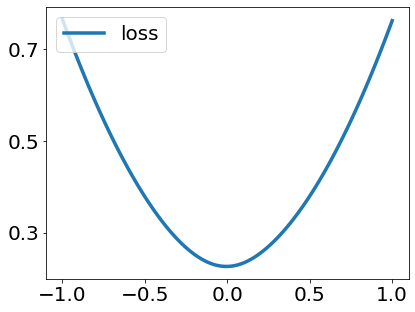}
    \includegraphics[width=0.32\textwidth,height=0.25\textwidth]{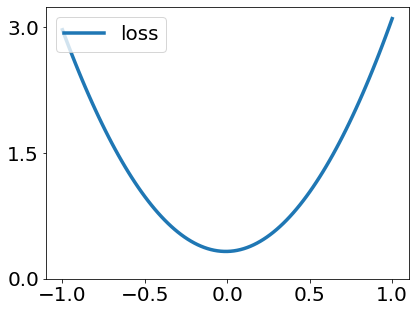} \\
    \includegraphics[width=0.32\textwidth,height=0.25\textwidth]{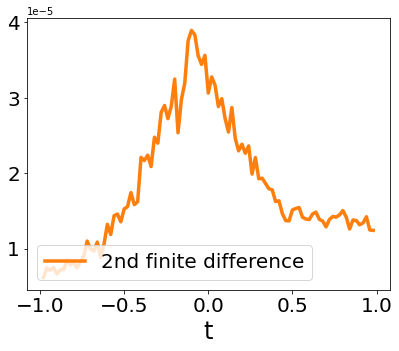}
    \includegraphics[width=0.32\textwidth,height=0.25\textwidth]{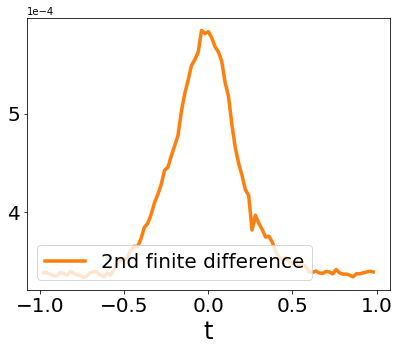}
    \includegraphics[width=0.32\textwidth,height=0.25\textwidth]{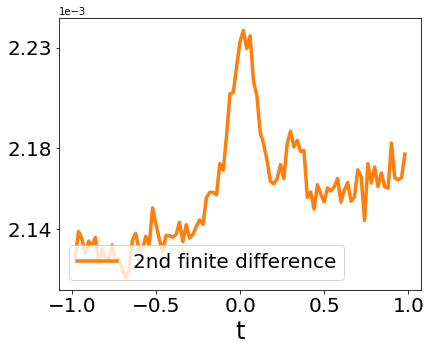} \\
    \caption{The loss landscape on gradient directions for GD when the iterator oscillates around the (local or global) minimum. The first row shows the loss, and the second row shows the second-order finite difference of the loss. Models and datasets: {\bf(left)} VGG11 on CIFAR10; {\bf(middle)} ResNet18 on CIFAR10; {\bf(right)} DenseNet121 on CIFAR100, only 50 classes are taken with 100 images per class. Note that on the second row we compute the second order finite difference of vectors (of loss values we plot on the first row). It is proportional but not equivalent to the second-order derivative.}
    \label{fig:flattening}
\end{figure}

This subquadratic growth of the landscape around minimum (at least along the gradient direction) explains the edge of stability phenomenon. The stable learning rate for GD increases as the parameters move close to the minimum. Hence, when the learning rate is not small enough, the iterator becomes unstable when it is too close to the minimum, and hence can only oscillate around the minimum at a certain level---but it may not blow up. Note that this subquadratic growth is not contradictory with the local quadratic approximation---the Taylor expansion of the loss function still holds locally, but the radius of this region is very small. 

\begin{figure}[h!]
    \centering
    \includegraphics[width=0.32\textwidth,height=0.25\textwidth]{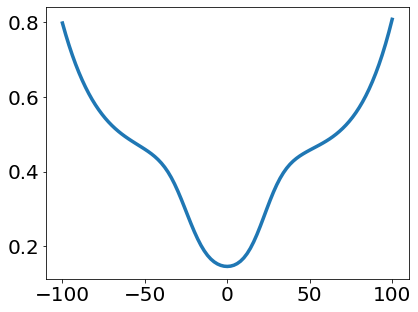}
    \includegraphics[width=0.32\textwidth,height=0.25\textwidth]{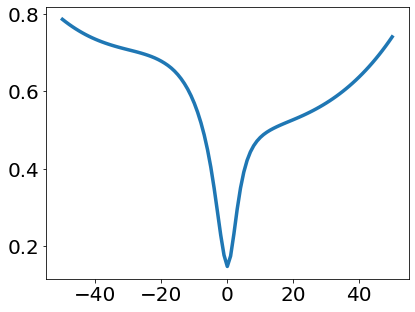}
    \includegraphics[width=0.32\textwidth,height=0.25\textwidth]{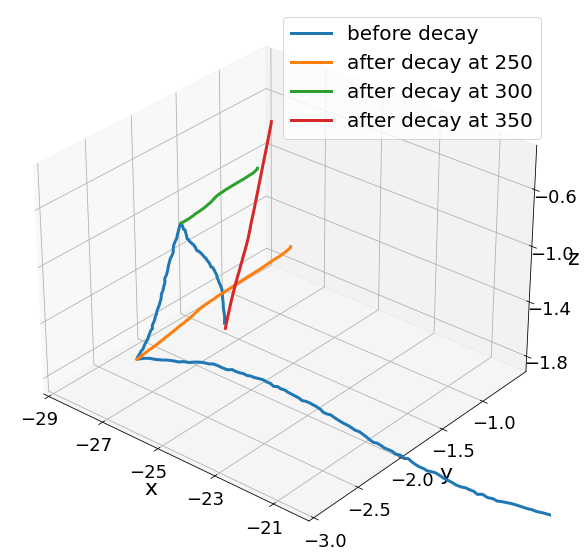}
    \caption{The landscape around a point on the GD trajectory of training a VGG11 network on CIFAR10. The left panel takes the gradient direction, and the middle panel takes a random direction. The range of visualization is larger than that in Figure~\ref{fig:flattening}. We can see separate scales in the loss landscape. The right panel shows the parameter vector of a VGG11 model projected in a 3-D space. The results of $3$ experiments are shown, with learning rate decayed at epoch $250$, $300$, and $350$, respectively. The x, y, z directions are obtained by orthogonalizing the first leading principal component of the blue segment before LRD (parameters with large learning rate), the orange segment (parameters after LRD at epoch 250), and the red segment (parameters after LRD at epoch 350), respectively.}
    \label{fig:multiscale1}
\end{figure}

\subsection{The separate scales structure}
If the landscape is visualized in a larger domain, we will inevitably see non-convex behaviors of the loss functions. Some examples are shown in the left and middle panels of Figure~\ref{fig:multiscale1}.
Here, we can observe another typical structure of neural network's loss functions---separate scales. Figures in~\ref{fig:multiscale1} show that the minimum lies in a small, sharp well located within a large, flat well. The separate scales structure can give richer behaviors in high dimensional spaces. For example, in the right panel of Figure~\ref{fig:multiscale1} we indirectly show the multiscale loss by visualizing the principal moving directions of the model parameters before and after learning rate decay. In the figure the iterator goes along very different directions when the learning rate is decayed at different epochs, showing rich fine-scale structures of the loss function, which are different from the large-scale structure reflected by the blue curves picked by large learning rate. Later in Section~\ref{sec:multiscale} we show this high dimensional multiscale structure is an important cause of LRD's complicated behaviors.

\section{The subquadratic property and the edge of stability}\label{sec:flattening}
In Section~\ref{sec:exp_subquadratic}, we observed a subquadratic growth property of the loss landscape.  This subquadratic growth makes it possible
to explain the edge of stability phenomenon discussed in Section~\ref{sec:obs}. In this section, using a simple problem inspired by the
landscape curves shown in Figure~\ref{fig:flattening} we reveal the mechanism of edge of
stability---the iterator oscillates at a certain level related to the learning rate when the
learning rate is too large for the optimizer to converge.

\subsection{A one-dimensional analysis}
Consider a 1-D strongly convex objective function $f(x)$ with a global minimum at $x=0$. Without loss of generality, assume $f(0)=0$. Inspired by the observations in Figure~\ref{fig:flattening}, we assume $f$ has continuous second order derivatives, and $f''(x)$ decreases as $|x|$ increases. Hence, the function shows subquadratic growth and becomes flatter (measured by the second order derivative) as $x$ is farther from the minimum. For the convenience of analysis, we also assume $f$ is an even function, i.e. $f(x)=f(-x)$ for any $x\in\RR$. We study a GD minimizing $f(x)$ using learning rate $\eta$, whose iteration scheme is
\begin{equation}\label{eqn:gd}
    x_{t+1} = x_t - \eta f'(x_t).
\end{equation}

By classical theories of gradient descent, it is easy to show that the iteration~\eqref{eqn:gd} converges to the minimum $x=0$ if $\eta<\frac{2}{f''(0)}$. If $\eta>\frac{2}{f''(0)}$, instead, $x=0$ becomes an unstable stationary point for the dynamics. In this case, other than $x=0$, there is a 2-periodic solution for the GD: the iterator jumps between $x_\eta$ and $-x_\eta$, where $x_\eta$ satisfies $\eta f'(x_\eta)=2x_\eta$. We assume $x_\eta>0$ and denote this periodic solution by $\{\pm x_{\eta}\}$. The following simple lemma shows that $x_\eta$ exists for any $\eta>\frac{2}{f''(0)}$ as long as $f''(x)$ goes to $0$ as $|x|$ tends to infinity, and $x_{\eta}$ is increasing with respect to $\eta$. 

\begin{lemma}\label{lm:x_eta}
If $\lim\limits_{|x|\rightarrow\infty} f''(x)=0$, $x_\eta$ exists for any $\eta>\frac{2}{f''(0)}$. Moreover, viewed as a function of $\eta$, $x_\eta$ is monotonically increasing. 
\end{lemma}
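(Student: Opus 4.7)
The plan is to recast the fixed-point equation $\eta f'(x_\eta)=2x_\eta$ into the level-set equation $g(x)=2/\eta$, where
\[
  g(x) \;:=\; \frac{f'(x)}{x} \;=\; \frac{1}{x}\int_0^x f''(s)\,ds \qquad (x>0),
\]
using that $f$ is even and smooth, so $f'(0)=0$. The whole lemma then reduces to showing that $g$ is continuous, strictly decreasing on $(0,\infty)$, with $g(0^+)=f''(0)$ and $g(\infty)=0$.

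First I would compute the boundary values. The limit at $0$ is immediate from L'Hopital (or a first-order Taylor expansion of $f'$ around $0$), giving $g(0^+)=f''(0)$. For the limit at infinity, I would use the integral representation above together with the hypothesis $\lim_{|x|\to\infty}f''(x)=0$: given $\varepsilon>0$, pick $M$ with $f''(s)<\varepsilon$ for $s\ge M$, then split the integral at $M$ and bound $g(x)\le \frac{1}{x}\int_0^M f''(s)\,ds+\varepsilon$, which is less than $2\varepsilon$ for all $x$ large enough. Hence $g(x)\to 0$.

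Next I would show $g$ is (strictly) decreasing. Differentiating,
\[
  g'(x) \;=\; \frac{f''(x)\cdot x - \int_0^x f''(s)\,ds}{x^2}.
\]
Since $f''$ is decreasing in $|x|$, we have $f''(s)\ge f''(x)$ for all $s\in[0,x]$, so $\int_0^x f''(s)\,ds\ge x\,f''(x)$, and the inequality is strict unless $f''$ is constant on $[0,x]$, which is ruled out by the assumption that $f''$ decreases and $f''(\infty)=0<f''(0)$. Thus $g'(x)<0$ for all $x>0$, so $g$ is strictly decreasing.

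Combining these three facts, $g$ is a continuous strictly decreasing bijection $(0,\infty)\to(0,f''(0))$. For any $\eta>2/f''(0)$, the value $2/\eta$ lies in $(0,f''(0))$, so by the intermediate value theorem there is a unique $x_\eta>0$ with $g(x_\eta)=2/\eta$, proving existence. Finally, as $\eta$ increases, $2/\eta$ decreases, and since $g^{-1}$ is itself strictly decreasing (being the inverse of a strictly decreasing function), $x_\eta=g^{-1}(2/\eta)$ is strictly increasing in $\eta$. The only nontrivial obstacle is justifying $g(\infty)=0$ from the pointwise decay of $f''$, which is handled by the standard Cesaro-type estimate above.
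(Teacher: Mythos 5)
Your proof is correct and follows essentially the same route as the paper: you work with the same auxiliary function $h(x)=f'(x)/x$, establish $h(0^+)=f''(0)$ by L'Hopital and $h(x)\to 0$ as $x\to\infty$, prove monotonicity via the comparison $\int_0^x f''(t)\,dt \ge x\,f''(x)$, and then invert the relation $2/\eta=h(x_\eta)$. The only differences are cosmetic: you fill in the $x\to\infty$ limit with a Cesaro-type splitting (which the paper simply asserts), and your claim that $g'(x)<0$ for \emph{every} $x>0$ is slightly stronger than the hypotheses warrant (if $f''$ were constant on an initial interval, $g$ would plateau there at the level $f''(0)$), but since the target level $2/\eta$ is strictly below $f''(0)$ this does not affect the existence, uniqueness, or monotonicity of $x_\eta$.
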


\begin{proof}
Note that if $x_\eta$ exists for some $\eta>0$, we have $\eta f'(x_\eta)=2x_\eta$, which means 
\begin{equation*}
    \frac{2}{\eta} = \frac{f'(x_\eta)}{x_\eta}.
\end{equation*}
Hence, let $h(x)=f'(x)/x$, it suffices to show that $h(x)$ is a decreasing function in $(0,\infty)$, and $\lim\limits_{x\rightarrow0^+} h(x)=f''(0)$, and $\lim\limits_{x\rightarrow\infty} h(x)=0$.

We first show the monotonicity of $h$. Taking derivative of $h$, we have
\begin{equation*}
    h'(x) = \frac{f''(x)x-f'(x)}{x^2}.
\end{equation*}
Since $f''(x)$ is decreasing, we have
\begin{equation*}
    f''(x)x=\int_0^x f''(x)dt \leq \int_0^x f''(t)dt = f'(x)-f'(0) = f'(x).
\end{equation*}
Therefore, $f''(x)x-f'(x)\leq0$, and hence $h'(x)\leq0$. This shows $h(x)$ is monotonically decreasing. 

For the limits, writing $f'(x)$ as integral of $f''(x)$, we have
\begin{equation*}
    h(x)=\frac{1}{x}\int_0^x f''(t)dt.
\end{equation*}
By L'hopital's rule we obtain $\lim\limits_{x\rightarrow0^+} h(x)=f''(0)$. On the other side, by $\lim\limits_{x\rightarrow\infty} f''(x)=0$ we easily have $\lim\limits_{x\rightarrow\infty} h(x)=0$.
\end{proof}

Next, we consider the GD dynamics, and show that if the objective function is ``strictly subquadratic'', i.e. $f''(x)$ is strictly decreasing as $|x|$ increases, then the GD iterator converges to the periodic solution $\{\pm x_\eta\}$ except a zero-measure set of $x_0$.

\begin{theorem}\label{thm:2period}
Assume $f$ satisfies the conditions in Lemma~\ref{lm:x_eta}, and $f''(s)<f''(t)$ for any $|s|>|t|$. Let $x_0, x_1, ...$ be the GD trajectory following the iteration scheme of ~\eqref{eqn:gd} starting from some $x_0\in\RR$ with learning rate $\eta$. Then, except a zero-measure set over the choice of $x_0$, we have $x_t$ converges to $0$ if $\eta\leq \frac{2}{f''(0)}$, otherwise $x_t$ converges to the periodic solution $\{\pm x_\eta\}$. 
\end{theorem}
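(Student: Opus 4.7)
The plan is to exploit the oddness of the one-step map $g(x) := x - \eta f'(x)$ (inherited from evenness of $f$) to reduce the analysis to a one-sided map $G(y) := |g(y)|$ on $[0,\infty)$, identify the two candidate limit sets $\{0\}$ and $\{\pm x_\eta\}$, prove their hyperbolic character, and then combine the Lyapunov $V(x) := x^2$ with a measure-zero argument for the stable set of the repelling equilibrium.

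First I would dispose of the subcritical case $\eta \leq 2/f''(0)$. With $h(x) := f'(x)/x$ as in Lemma~\ref{lm:x_eta}, a direct expansion yields
\[
V(x_{t+1}) - V(x_t) \;=\; \eta\, x_t^2\, h(x_t)\,\bigl(\eta h(x_t) - 2\bigr).
\]
The strict monotonicity of $f''$ makes $h$ strictly decreasing on $(0,\infty)$ with $\lim_{x\to 0^+} h(x) = f''(0)$, so whenever $x_t \neq 0$ one has $\eta h(x_t) < \eta f''(0) \leq 2$, making $V_t$ strictly decreasing and nonnegative. Hence $V_{t+1} - V_t \to 0$, and any subsequential limit $x^*$ of the bounded sequence $\{x_t\}$ must then satisfy $\eta (x^*)^2 h(x^*)(\eta h(x^*) - 2) = 0$, which forces $x^* = 0$. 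Therefore $x_t \to 0$ for \emph{every} $x_0$ in this regime and the exceptional set is empty.

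For the supercritical case $\eta > 2/f''(0)$, oddness of $g$ implies $y_t := |x_t|$ satisfies $y_{t+1} = G(y_t)$. The fixed points of $G$ are exactly $\{0, x_\eta\}$, and they are hyperbolic: at $0$, $|g'(0)| = \eta f''(0) - 1 > 1$ gives repulsion; at $x_\eta$, the strict-subquadratic inequality $f''(x) x < f'(x)$ (with strict inequality now coming from strict monotonicity of $f''$, by the computation in the proof of Lemma~\ref{lm:x_eta}) yields $\eta f''(x_\eta) < \eta h(x_\eta) = 2$, so $|G'(x_\eta)| = |\eta f''(x_\eta) - 1| < 1$ and $x_\eta$ is attracting. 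Uniqueness of the period-$2$ orbit $\{\pm x_\eta\}$ of $g$ follows by adding the two period-$2$ equations: $f'(a) + f'(b) = 0$ together with strict monotonicity of $f'$ forces $a = -b$. Reapplying the identity for $V(x_{t+1})-V(x_t)$ shows this increment is strictly positive on $\{0 < |x_t| < x_\eta\}$, strictly negative on $\{|x_t| > x_\eta\}$, and vanishes only on $|x_t| \in \{0, x_\eta\}$; this simultaneously bounds the orbit (large $|x_t|$ shrinks $V$) and restricts the accumulation points of $|x_t|$ to $\{0, x_\eta\}$. The set $W^s(0) = \bigcup_{n \geq 0} g^{-n}(0)$ of trajectories that escape to $0$ is countable, because $g$ has exactly two critical points (the solutions of $f''(\pm y) = 1/\eta$) and hence each preimage $g^{-n}(0)$ is finite; consequently $W^s(0)$ has Lebesgue measure zero. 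For $x_0 \notin W^s(0)$, some accumulation point of $|x_t|$ must be $x_\eta$, the local contraction $|G'(x_\eta)| < 1$ traps the orbit once it enters the local basin, and the sign relation $g(x_\eta) = -x_\eta$ then yields $x_t \to \{\pm x_\eta\}$.

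The subtlest step will be the passage from ``accumulation points of $|x_t|$ lie in $\{0, x_\eta\}$'' to ``$|x_t| \to x_\eta$ for all $x_0 \notin W^s(0)$.'' A priori the orbit could weave between neighborhoods of $0$ and $x_\eta$ without settling, so one must combine hyperbolic repulsion at $0$ (any orbit entering a fixed neighborhood of $0$ is ejected unless it belongs to $W^s(0)$) with local attraction at $x_\eta$ (any orbit entering the local basin stays and converges). Once these are paired with the Lyapunov constraint restricting bulk accumulation to $\{0, x_\eta\}$, only finitely many ``bounces'' between the two neighborhoods can occur before the orbit settles into the basin of $x_\eta$, yielding the claimed almost-everywhere convergence to the periodic solution.
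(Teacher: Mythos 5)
Your subcritical case ($\eta\le 2/f''(0)$) is correct, and your measure-zero argument for the stable set of $0$ (finitely many critical points of $g(x)=x-\eta f'(x)$, hence finite preimages, hence $\bigcup_n g^{-n}(0)$ countable) is actually more complete than the paper's, which delegates this point to a citation. The genuine gap is in the supercritical case, at the sentence claiming that the sign pattern of the increment $V(x_{t+1})-V(x_t)=\eta x_t^2h(x_t)(\eta h(x_t)-2)$ ``restricts the accumulation points of $|x_t|$ to $\{0,x_\eta\}$.'' It does not. In the folded coordinate $y_t=|x_t|$, that sign pattern only says $G(y)>y$ on $(0,x_\eta)$ and $G(y)<y$ on $(x_\eta,\infty)$, where $G=|g|$; this is fully compatible with $G$ having a $2$-cycle $\{p,q\}$ with $p<x_\eta<q$ (so $V$ oscillates between $p^2$ and $q^2$ forever), which corresponds to a $4$-cycle of $g$ such as $p\to-q\to-p\to q\to p$. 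Your uniqueness argument via $f'(a)+f'(b)=0$ only excludes $2$-cycles of $g$, not $4$-cycles, and the local hyperbolicity facts at $0$ and $x_\eta$ say nothing about orbits recurring away from both fixed points. Since a non-monotone Lyapunov function gives no LaSalle-type conclusion, the passage from the sign analysis to ``accumulation points $\subseteq\{0,x_\eta\}$'' — on which your entire ``finitely many bounces'' endgame rests — is unsubstantiated.

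What is missing is a global, per-step contraction toward the periodic orbit, and this is exactly the paper's key estimate: for $x_k>0$ (WLOG by symmetry) one shows $|x_{k+1}-(-x_\eta)|\le|x_k-x_\eta|$ with equality only when $x_k=x_\eta$, by bounding
\[
0<\eta\int_{x_\eta}^{x_k}f''(t)\,dt<2(x_k-x_\eta)\qquad(\text{and the analogous bound when }x_k<x_\eta),
\]
which uses the strict decrease of $f''$ together with $f'(x_\eta)/x_\eta=2/\eta$ from Lemma~\ref{lm:x_eta}. Equivalently, $\bigl||x_{k+1}|-x_\eta\bigr|<\bigl||x_k|-x_\eta\bigr|$ unless $|x_k|=x_\eta$; this single inequality rules out all other cycles and recurrent behavior of $G$ at once, makes the accumulation-point restriction true, and renders the bouncing discussion unnecessary (monotone decrease of the distance plus a routine limit-point argument gives convergence to $\{\pm x_\eta\}$). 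To repair your proof you must establish this strict one-step contraction (or an equivalent global statement about $G$ on $(0,\infty)$); the ingredients you already use ($f''$ strictly decreasing, $h$ strictly decreasing, $\eta f'(x_\eta)=2x_\eta$) suffice, but the estimate itself has to be carried out rather than inferred from the Lyapunov sign structure.
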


\begin{proof}
When $\eta<\frac{2}{f''(0)}$, the proof of convergence is standard. When $\eta=\frac{2}{f''(0)}$, note that $f''(x)<f''(0)$ for any $x\neq0$. Then, for any $x\neq0$, we have 
\begin{equation*}
|\eta f'(x)| = \eta \left|\int_0^{x} f''(t)dt\right| < \eta\left|\int_0^{x} f''(0)dt\right| = \eta xf''(0) = 2x,
\end{equation*}
which implies
\begin{equation*}
|x-\eta f'(x)| < |x|.
\end{equation*}
This gives convergence of the GD trajectory to $0$.

Next, we consider the case when $\eta>\frac{2}{f''(0)}$. Let $A$ be the set of those $x_0$ such that starting from these $x_0$ the GD will arrive at $0$ after some steps. Because in the current case $0$ is an unstable stationary point, it is easy to show that $A$ contains countable number of points and hence has zero Lebesgue measure~\cite{wang2021large}. We ignore the detailed proof here. 

For any $x_0\in\RR\backslash A$, let $x_1, x_2, x_3, ...$ be the sequence of points generated by GD. Then, we have $x_k\neq0$ for any $k=0,1,2,...$. In this case, we show that for any $k$, the distance of $x_{k+1}$ to one of $\pm x_\eta$ is always smaller than the distance of $x_k$ to one of $\pm x_\eta$. Since $f$ is symmetric with respect to $x=0$, without loss of generality we assume $x_k>0$. Hence, $x_k$ is closer to $x_\eta$ than $-x_\eta$. We show that 
\begin{equation*}
    |x_{k+1}-(-x_\eta)| \leq |x_k-x_\eta|. 
\end{equation*}
This is equivalent with 
\begin{equation}\label{eqn:pf_conv_1}
    |x_k-\eta f'(x_k)-x_\eta + \eta f'(x_\eta)| \leq |x_k-x_\eta|.
\end{equation}
Rewriting the left hand side of~\eqref{eqn:pf_conv_1}, we have
\begin{align}
|x_k-\eta f'(x_k)-x_\eta + \eta f'(x_\eta)| &= |(x_k-x_\eta) - \eta(f'(x_k)-f'(x_\eta))| \nonumber\\
  &= \big|(x_k-x_\eta) - \eta\int_{x_\eta}^{x_k} f''(t)dt\big| \label{eqn:pf_conv_2}
\end{align}

We study the right hand side of~\eqref{eqn:pf_conv_2} in two cases:

\noindent\textbf{Case 1: $x_k>x_\eta$}

In this case, we have
\begin{equation*}
\int_{x_\eta}^{x_k} f''(t)dt < (x_k-x_\eta)f''(x_\eta) < (x_k-x_\eta)\frac{1}{x_\eta}\int_0^{x_\eta} f''(t)dt = (x_k-x_\eta)\frac{f'(x_\eta)}{x_\eta} = \frac{2}{\eta}(x_k-x_\eta).
\end{equation*}
Also considering $\int_{x_\eta}^{x_k} f''(t)dt>0$, we have
\begin{equation*}
    0 < \eta\int_{x_\eta}^{x_k} f''(t)dt < 2(x_k-x_\eta),
\end{equation*}
which implies
\begin{equation*}
\big|(x_k-x_\eta) - \eta\int_{x_\eta}^{x_k} f''(t)dt\big| < |x_k-x_\eta|. 
\end{equation*}

\noindent\textbf{Case 2: $x_k<x_\eta$}

In this case, we rewrite the right hand side of~\eqref{eqn:pf_conv_2} as
\begin{equation*}
    \big|(x_\eta-x_k) - \eta\int_{x_k}^{x_\eta} f''(t)dt\big|.
\end{equation*}
For the integral term, due to the monotonicity of $f''$, we have
\begin{equation*}
\int_{x_k}^{x_\eta} f''(t)dt < \frac{x_\eta-x_k}{x_\eta} \int_0^{x_\eta} f''(t)dt = (x_\eta-x_k)\frac{f'(x_\eta)}{x_\eta} = \frac{2}{\eta}(x_\eta-x_k).
\end{equation*}
Hence, again we have
\begin{equation*}
    0 < \eta\int_{x_k}^{x_\eta} f''(t)dt < 2(x_\eta-x_k),
\end{equation*}
and thus 
\begin{equation*}
\big|(x_\eta-x_k) - \eta\int_{x_k}^{x_\eta} f''(t)dt\big| < |x_\eta-x_k|. 
\end{equation*}

Since $x_k=x_\eta$ is the trivial case, we finish showing~\eqref{eqn:pf_conv_1}. And the only way for equality to hold is $x_k=x_\eta$. Therefore, the GD trajectory converges to $\{\pm x_\eta\}$. 
\end{proof}

Theorem~\ref{thm:2period} shows that subquadratic growth can cause the edge of stability phenomenon. When the learning rate is too big to converge, the GD does not blow up. Instead, it oscillates at a certain level related with the learning rate. During the oscillation, if the learning rate is dropped to a smaller value, the iterator will leave the current periodic solution and converge to a new periodic solution at a lower level. Results of numerical simulations are shown in Figure~\ref{fig:traj_loss_1d2d}. Similar phenomena are observed in~\cite{cohen2021gradient}.

\subsection{A multi-dimensional analysis}
The problem analyzed above possesses an accurate mathematical characterization because the objective function considered is simple. It is a 1-D function, and though it enjoys subquadratic growth, it is still convex. For more general cases, e.g. non-convex high dimensional functions, the picture is much more complicated for at least two reasons: (1) there may be more than one periodic solutions, and the period of some solutions can be very long; (2) the dynamics can easily get chaotic when the learning rate is large. See Figure~\ref{fig:traj_loss_1d2d} for some experiments. Nevertheless, as long as there is subquadratic growth of the landscape around the minimum, the GD still does not blow up for large learning rates. In this and the next subsection, we make some extensions for the theory to consider high-dimensional/non-convex functions. 

\begin{figure}[h!]
    \centering
    \includegraphics[width=0.32\textwidth,height=0.25\textwidth]{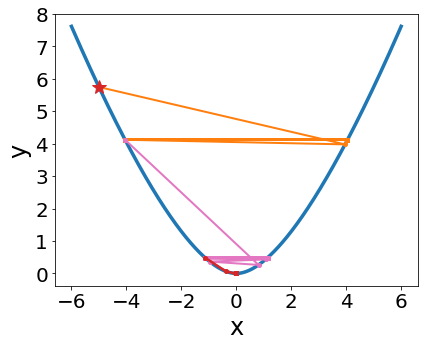}
    \includegraphics[width=0.32\textwidth,height=0.25\textwidth]{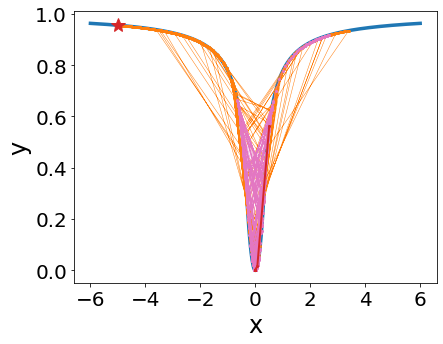}
    \includegraphics[width=0.32\textwidth,height=0.25\textwidth]{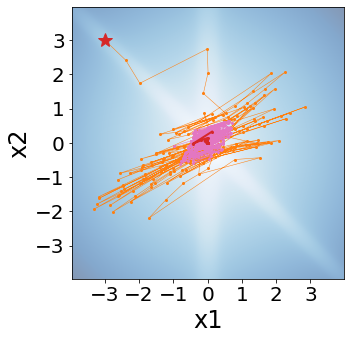} \\
    \includegraphics[width=0.32\textwidth,height=0.25\textwidth]{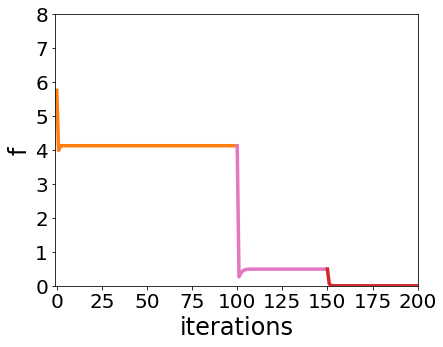}
    \includegraphics[width=0.32\textwidth,height=0.25\textwidth]{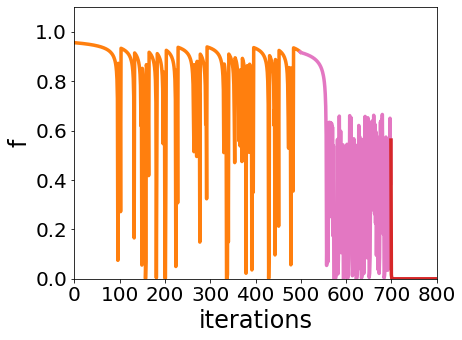}
    \includegraphics[width=0.32\textwidth,height=0.25\textwidth]{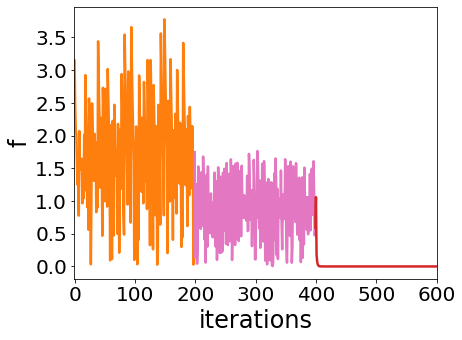} \\
    \caption{The trajectory and objective function values when minimizing subquadratic functions by GD. {\bf (left)} A 1-D convex function $f_1(x)=(1+|x|)\ln(1+|x|)-|x|$. The initialization is $x_0=-5$. The learning rate is initially $5$, and decreased to $3$ and $1$ on iteration $100$ and $150$. {\bf (middle)} A 1-D non-convex function $f_2$ given by~\eqref{eqn:fcn_integral} with $C=3$. The initialization is $x_0=-5$. The learning rate is initially $2$, and decreased to $0.5$ and $0.2$ at iterations $500$ and $700$. GD show chaotic behaviors in this case. {\bf (right)} A 2-D non-convex function $f_3(x_1, x_2)=f_2(x_1)+f_2(x_1+x_2)+0.5f_1(\sqrt{x_1^2+x_2^2})$. The initialization is $(-3,3)$. The learning rate is initially $1$, and decreased to $0.3$ and $0.1$ at iterations $200$ and $400$. GD show chaotic behaviors.}
    \label{fig:traj_loss_1d2d}
\end{figure}

We first extend our analysis to a class of high dimensional functions that can be decomposed into a sum of subquadratic functions in eigendirections. Concretely, we consider functions like
\begin{equation}\label{eqn:flatfcn_extension1}
    f(\bx) = f_1(\bp_1^T\bx) + f_2(\bp_2^T\bx) + \cdots + f_n(\bp_n^T\bx),
\end{equation}
where $\bx, \bp_i \in\RR^n$, $\{\bp_1, ..., \bp_n\}$ is an orthonormal basis of $\RR^n$, and $f_1, ..., f_n$ are subquadratic functions satisfying the conditions in Theorem~\ref{thm:2period}. This decomposition is inspired by the eigen-decomposition of quadratic functions. For such functions, applying theorem~\ref{thm:2period} we have the following results:

\begin{theorem}\label{thm:eigen_decomp}
Let $f:\RR^n\rightarrow\RR$ be a function with the form~\eqref{eqn:flatfcn_extension1}. Let $\bx_0, \bx_1, ...$ be the GD trajectory on $f$ starting from $\bx_0$ with learning rate $\eta$. For any $t\in\ZZ_+$ and $i\in\{1,2,...,n\}$, define $z_t^{(i)}=\bp_i^T\bx_t$ be the component of $\bx_t$ on $\bp_i$ direction. Then, except a zero-measure set over the choice of $\bx_0$, we have $z_t^{(i)}$ converges to $0$ if $\eta\leq\frac{2}{f''_i(0)}$, otherwise $z_t^{(i)}$ converges to a periodic solution $\{\pm x_\eta^{(i)}\}$. Here $\{\pm x_\eta^{(i)}\}$ is the periodic solution for $f_i$ with learning rate $\eta$. 
\end{theorem}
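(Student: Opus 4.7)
The plan is to exploit the fact that the decomposition in~\eqref{eqn:flatfcn_extension1} uses an orthonormal basis, which decouples the GD dynamics into $n$ independent one-dimensional subquadratic problems; the result then follows coordinate by coordinate from Theorem~\ref{thm:2period}.

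First, I would change coordinates. Writing $\bx_t = \sum_{i=1}^n z_t^{(i)} \bp_i$ with $z_t^{(i)} = \bp_i^T \bx_t$, and using $\nabla(f_i \circ \bp_i^T)(\bx) = f_i'(\bp_i^T \bx)\bp_i$, the gradient decomposes as
\begin{equation*}
\nabla f(\bx_t) \;=\; \sum_{i=1}^n f_i'(z_t^{(i)})\,\bp_i .
\end{equation*}
Projecting the GD update $\bx_{t+1} = \bx_t - \eta \nabla f(\bx_t)$ onto $\bp_i$ and using orthonormality of $\{\bp_j\}$, I get
\begin{equation*}
z_{t+1}^{(i)} \;=\; z_t^{(i)} - \eta f_i'(z_t^{(i)})\qquad\text{for each } i=1,\dots,n.
\end{equation*}
Thus the $n$ coordinate sequences $\{z_t^{(i)}\}_{t\ge 0}$ evolve completely independently, each one being precisely the 1-D GD iteration~\eqref{eqn:gd} for $f_i$ with learning rate $\eta$ and initialization $z_0^{(i)} = \bp_i^T \bx_0$.

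Next I would apply Theorem~\ref{thm:2period} to each coordinate. For each $i$, that theorem yields a set $A_i \subset \RR$ of Lebesgue measure zero such that whenever $z_0^{(i)} \notin A_i$, the sequence $z_t^{(i)}$ converges to $0$ if $\eta \le 2/f_i''(0)$ and to $\{\pm x_\eta^{(i)}\}$ otherwise. The conclusion of Theorem~\ref{thm:eigen_decomp} therefore holds for any $\bx_0$ such that $\bp_i^T \bx_0 \notin A_i$ for all $i$. The only thing to check is that the complementary set
\begin{equation*}
\widetilde{A} \;=\; \bigcup_{i=1}^n \bigl\{\bx_0 \in \RR^n : \bp_i^T \bx_0 \in A_i\bigr\}
\end{equation*}
has Lebesgue measure zero in $\RR^n$. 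Since each $A_i$ is a zero-measure subset of $\RR$, each cylinder $\{\bx_0 : \bp_i^T \bx_0 \in A_i\}$ is the image under the orthogonal (hence measure-preserving) change of basis of $A_i \times \RR^{n-1}$, which has measure zero by Fubini; a finite union of such sets is still a null set.

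The argument is essentially routine given Theorem~\ref{thm:2period}; the only mildly delicate point is the zero-measure bookkeeping, which is handled cleanly by the Fubini/orthogonality observation above. I do not expect a serious obstacle, since the axis-aligned structure of~\eqref{eqn:flatfcn_extension1} is exactly what prevents the cross-coordinate chaos alluded to in the discussion preceding the theorem.
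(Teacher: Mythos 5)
Your proposal is correct and follows essentially the same route as the paper, which also applies Theorem~\ref{thm:2period} coordinatewise after using the orthonormality of $\{\bp_1,\dots,\bp_n\}$ to decouple the GD iteration. Your explicit handling of the exceptional set as a finite union of measure-zero cylinders is a fine addition of detail that the paper leaves implicit.
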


The proof of Theorem~\ref{thm:eigen_decomp} is a simple application of Theorem~\ref{thm:2period} on each of the components of $f$. This is possible because $\bp_1, ..., \bp_n$ are pairwise orthogonal. Note that Theorem~\ref{thm:eigen_decomp} does not imply that GD will converge to a unique 2-periodic solution. Actually, there are exponentially many 2-periodic solutions due to the combination of signs of each $f_i$'s periodic solution.

\subsection{A more general setting}
Next, we extend our study to more general cases, including nonconvex functions. In this subsection, we consider a wide class of subquadratic functions and show that GD with large learning rate does not diverge on these functions.

\begin{definition}\label{def:flat_fcn}
Let $f:\ \RR^n\rightarrow\RR$ be a twice continuously differentiable objective function. We call $f$ a subquadratic function if $f$ has a unique global (and local) minimum $\bx^*$, and we have $\lim\limits_{\|\bx\|\rightarrow\infty} \|\nabla f(\bx)\|/\|\bx\| = 0$, and $\nabla f(\bx)^T(\bx-\bx^*)\geq c\|\nabla f(\bx)\|\cdot\|\bx-\bx^*\|$ for any $\bx\in\RR^n$, where $c>0$ is a constant. 
\end{definition}

For quadratic functions, the gradient grows linearly with the magnitude of the input. For functions that satisfies the definition above, $\|\nabla f(\bx)\|$ grows slower than $\|\bx\|$. Thus, the function grows slower than a quadratic function.
Compared with the condition in our 1-D example, Definition~\ref{def:flat_fcn} is weaker, in the sense that $\|\nabla f(\bx)\|$ can decrease as $\|\bx\|$ gets bigger. Hence, the function $f$ can be nonconvex. On the other hand, the condition on the inner product of the $\nabla f(\bx)$ and $\bx-\bx^*$ guarantees that the gradient always has a component pointing towards $0$, which is the unique global minimum. 

For functions satisfying the definition above, we can show that GD does not diverge with any learning rate. 

\begin{theorem}\label{thm:flat_highdim}
Let $f:\ \RR^n\rightarrow\RR$ be a subquadratic function defined in Definition~\ref{def:flat_fcn}. Then, for any learning rate $\eta>0$, there exists $R_\eta>0$, such that for any GD trajectory $\bx_0, \bx_1, \bx_2, ...$ generated with learning rate $\eta$, there exists $T\in\ZZ$ that satisfies $\bx_t\in B_{R_\eta}(x^*)$ for any $t>T$. Here $B_{R_\eta}(x^*)$ denotes the closed Euclidean ball centered at $x^*$ with radius $R_\eta$. 
\end{theorem}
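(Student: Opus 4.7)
The plan is to prove a one-step contraction of $\|\bx_t-\bx^*\|^2$ whenever the iterate lies sufficiently far from $\bx^*$, and then control the overshoot that can happen once it enters a neighborhood of $\bx^*$. Without loss of generality take $\bx^*=0$; write $r_t:=\|\bx_t\|$ and $g_t:=\|\nabla f(\bx_t)\|$. Expanding the GD update gives
\begin{equation*}
r_{t+1}^2 \;=\; r_t^2 \;-\; 2\eta\,\langle \nabla f(\bx_t),\bx_t\rangle \;+\; \eta^2 g_t^2,
\end{equation*}
and the inner-product lower bound from Definition~\ref{def:flat_fcn} converts this into $r_{t+1}^2 \leq r_t^2 - 2c\eta g_t r_t + \eta^2 g_t^2$. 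The negative cross term is linear in $g_t$ while the quadratic noise term is quadratic in $g_t$, so a decrease is forced as soon as $g_t$ is small relative to $r_t$---precisely what the sublinear growth $\|\nabla f(\bx)\|/\|\bx\|\to 0$ provides.

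The first key step is to choose $R_\eta$ so that $\|\bx\|>R_\eta$ implies $\|\nabla f(\bx)\|<(c/\eta)\|\bx\|$. Then for every $t$ with $r_t>R_\eta$ the quadratic term obeys $\eta^2 g_t^2 < c\eta g_t r_t$, and the recursion collapses to $r_{t+1}^2\leq r_t^2 - c\eta\, g_t r_t \leq r_t^2$. Hence while the trajectory stays outside $\overline{B}_{R_\eta}(\bx^*)$, the squared distance to $\bx^*$ is non-increasing.

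The second key step is to show the trajectory actually enters $\overline{B}_{R_\eta}(\bx^*)$ in finite time. Assume for contradiction that $r_t>R_\eta$ for all $t$: then $r_t$ decreases monotonically to some $r_\infty\geq R_\eta$, and telescoping the recursion forces $\sum_t g_t r_t<\infty$, hence $g_t\to 0$. The bounded sequence $\{\bx_t\}$ then has, by Bolzano--Weierstrass, a convergent subsequence $\bx_{t_k}\to\bx_\infty$ with $\|\bx_\infty\|\geq R_\eta$ and, by continuity of $\nabla f$, $\nabla f(\bx_\infty)=0$---contradicting that $\bx^*$ is the only critical point of $f$. Once the trajectory is inside, a single GD step can enlarge $r_t$ by at most $\eta G_\eta$, where $G_\eta := \max_{\bx\in \overline{B}_{R_\eta}(\bx^*)}\|\nabla f(\bx)\|<\infty$ by continuity. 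Replacing $R_\eta\leftarrow R_\eta+\eta G_\eta$ preserves the outer-contraction property (enlarging $R_\eta$ only makes the sublinear-growth condition easier to satisfy) and produces a ball that traps the trajectory from its first entry time onward.

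The main obstacle is the contradiction step above: Definition~\ref{def:flat_fcn} only asserts uniqueness of the \emph{minimum}, while its inner-product condition is vacuously satisfied at any point where $\nabla f$ vanishes, so spurious critical points are not formally ruled out by the assumptions. A clean fix is to exploit the radial monotonicity implied by the angle condition---namely, that $f$ is non-decreasing along every ray from $\bx^*$---which heavily restricts where additional critical points can sit; alternatively one can weaken the conclusion to eventual boundedness of $\{\bx_t\}$ and simply enlarge $R_\eta$ to absorb any such stagnation point. Either reading delivers the statement of the theorem.
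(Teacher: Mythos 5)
Your proof follows essentially the same route as the paper's: pick a radius beyond which the sublinear gradient growth together with the angle condition forces $\|\bx_{t+1}-\bx^*\|<\|\bx_t-\bx^*\|$, then use continuity of $\nabla f$ on the resulting compact ball to bound the one-step overshoot and obtain an enlarged invariant ball that traps the trajectory after first entry. Two differences are worth noting. First, you are more careful than the paper on the finite-time-entry step: the paper simply asserts that entry into the invariant ball ``is a natural result'' of the contraction, whereas strict decrease of the distance alone does not rule out stalling above the threshold radius; your choice of the threshold $\|\nabla f(\bx)\|<(c/\eta)\|\bx\|$ (rather than the paper's $2c/\eta$) yields the quantitative decrement $r_{t+1}^2\le r_t^2-c\eta\,g_t r_t$, and your telescoping plus Bolzano--Weierstrass argument closes that gap properly. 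Second, the obstacle you flag is genuine and applies equally to the paper's proof: Definition~\ref{def:flat_fcn} only posits a unique (global and local) minimum, and the inner-product condition is vacuous wherever $\nabla f=0$, so spurious critical points are not formally excluded; note, however, that your suggested repair of enlarging $R_\eta$ does not fully work either, since if such critical points occurred at arbitrarily large radii a trajectory initialized at one of them would defeat any fixed $R_\eta$, so the theorem as stated really needs the reading that $\bx^*$ is the only critical point. Under that intended reading (which the paper tacitly assumes), your contradiction step is valid and your argument is complete--and somewhat tighter than the paper's.
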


\begin{proof}
Without loss of generality we assume $x^*=0$. We will use $B_r$ to denote the closed ball with radius $r$ and centered at the origin. Because $\lim\limits_{\|\bx\|\rightarrow\infty} \|\nabla f(\bx)\|/\|\bx\| = 0$, we can find an $r_1$ such that for any $\|\bx\| > r_1$ we have $\|\nabla f(\bx)\| < \frac{2c}{\eta}\|\bx\|$, where $c$ is the constant in Definition~\ref{def:flat_fcn}. For such $\bx$, we can easily verify that 
\begin{equation}\label{eqn:contraction}
    \|\bx-\eta\nabla f(\bx)\| < \|\bx\|. 
\end{equation}
Hence, for any $\bx$ that satisfies $\|\bx\| > r_1$, GD sends the iterator closer to the minimum. Equivalently speaking, only when the iterator is within $B_{r_1}$ can GD send the iterator to a farther (or with equal distance) location from the minimum. Now, consider the one step GD mapping
\begin{equation*}
    h(\bx):= \bx-\eta\nabla f(\bx). 
\end{equation*}
Since $\nabla f$ is continuous, $h$ is a continuous function. Hence, due to the compactness of $B_{r_1}$, there exists $r_2>0$ such that $\|h(\bx)\|\leq r_2$ for any $\bx\in B_{r_1}$. Then, for any $\bx\in B_{r_2}$, we always have $h(\bx)\in B_{r_2}$, i.e. $B_{r_2}$ is an invariant set for GD iterations. Note that $r_2$ only depends on $c$ and $\eta$. 

We finish the proof by showing that the GD trajectory from any initialization $x_0$ will enter $B_{r_2}$. This is a natural result of~\eqref{eqn:contraction}.
\end{proof}

Theorem~\ref{thm:flat_highdim} characterizes the qualitative behavior of GD around a subquadratic minimum. With a certain learning rate, GD will oscillates in a learning rate dependent neighborhood of the minimum. Under the current conditions, we cannot fully characterize the trajectory---it may hit the global minimum in some step, or oscillates at a certain level, or oscillates chaotically in the neighborhood of the minimum. (though when $\eta$ is not too small, hitting or converging to the minimum is a zero measure event.) A typical behavior in 2-D space is shown in Figure~\ref{fig:traj_loss_1d2d}.

The analysis in this section is based on the observation of the local landscape of neural network loss functions around minima. Though for the convenience of analysis we assume the objective function has a global subquadratic behavior, this is not true for neural network loss functions. The subquadratic growth will stop when the parameter is far enough from the minimum. We will address this issue in Section~\ref{sec:multiscale} when we study the separate scales structure of loss functions.

\subsection{What happens after the edge of stability}\label{ssec:manifold}
For the objective functions we considered above, the GD iterator will oscillate around the unique minimum after arriving at the edge of stability. However, when training real neural network the iterator keeps moving and reducing the loss value even after reaching the EoS~\cite{cohen2021gradient,kunin2021limiting} (also see Figure~\ref{fig:eos}). This is mainly due to the over-parameterized nature of neural networks, which produces manifolds of minima instead of isolated minimum. Assume there is a manifold formed by global minima, taking a quasistatic approach in the direction tangent and orthogonal with the manifold, we can study how the GD iterator moves down the manifold and search for flat minima.

Let $f$ be an 1-D function that satisfies the conditions in Theorem~\ref{thm:2period}. Let $h:\RR^d\rightarrow\RR$ be a smooth positive function. Consider the function $F:\RR^{d+1}\rightarrow\RR$ defined as $F(\bx,y)=f(h(\bx)y)$, where $\bx\in\RR^d$ and $y\in\RR$. It is easy to know that the global minima of $F$ form a $d$-dimensional manifold $\{y=0\}$, and the function has a subquadratic growth in the direction orthogonal to the manifold. The value of $h(\bx)$ determines the flatness of the minimum. The smaller the $h(\bx)$, the flatter the minimum.
To consider how the GD iterator moves down the minima manifold when oscillating around it, we take a quasistatic point of view, by assuming that the $y$ component of the iterator is always bouncing between the 2-periodic solution. This assumption makes sense when $h(\bx)$ changes slowly compared with the moving speed of $\bx$, which happens when the learning rate is relatively small.

Consider a GD trajectory generated from $(\bx_0,y_0)$ using learning rate $\eta$. In the quasistatic case, the update of $\bx_t$ is
\begin{equation}\label{eqn:quasistatic1}
    \bx_{t+1} = \bx_t - \eta f'(h(\bx_t)y_t)y_t\nabla h(\bx_t),
\end{equation}
while $y_t$ follows the 2-periodic solution
\begin{equation}\label{eqn:quasistatic2}
    -y_t = y_t - \eta f'(h(\bx_t)y_t)h(\bx_t).
\end{equation}
By~\eqref{eqn:quasistatic2}, we have $\eta f'(h(\bx_t)y_t) = 2y_t/h(\bx_t)$. Substituting to~\eqref{eqn:quasistatic1} we obtain
\begin{equation}\label{eqn:quasistatic3}
    \bx_{t+1} = \bx_t - 2y_t^2\frac{\nabla h(\bx_t)}{h(\bx_t)} = \bx_t -  2y_t^2\nabla\log h(\bx_t). 
\end{equation}
Equation~\eqref{eqn:quasistatic3} shows that the motion of the GD iterator projected onto the manifold follows a GD of $\log h(\bx)$, and the speed of the dynamics is determined by $y_t$. Therefore, during the oscillation around the minima manifold, GD searches for flatter minimum by reducing the value of $h(\bx)$. An illustration of this effect is shown in Figure~\ref{fig:one_valley}.

\begin{figure}[h!]
    \centering
    \includegraphics[width=0.32\textwidth,height=0.25\textwidth]{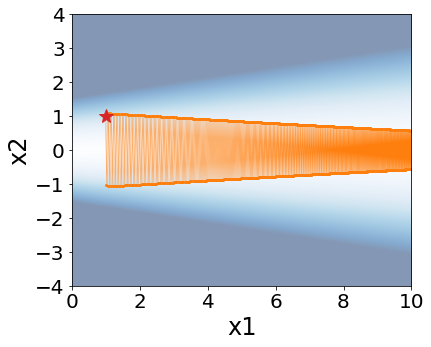}
    \includegraphics[width=0.32\textwidth,height=0.25\textwidth]{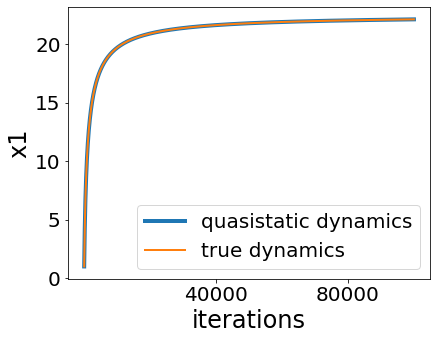}
    \includegraphics[width=0.32\textwidth,height=0.25\textwidth]{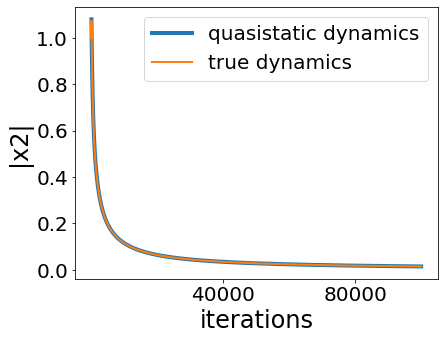}
    \caption{The trajectory of GD on a 2-D function with a flattening valley and subquadratic growth on $x_2$ direction. The iterator bounces back and forth on the valley and moves to the vicinity of flatter minima. The middle and right panels show the values of $x_1$ and $|x_2|$ for both GD and the quasistatic dynamics~\eqref{eqn:quasistatic3}. The results show: (1) the dynamics~\eqref{eqn:quasistatic3} is a very good approximation of the real dynamics. (2) The flatness-driven motion gets increasingly slower and finally stops. Experiment details: the loss function is $f(x_1,x_2)=f_1(x_2/(1+0.01x_1))$, where $f_1$ is defined in the caption of Figure~\ref{fig:traj_loss_1d2d}. The initialization is $(1,1)$. The learning rate is $3$.}
    \label{fig:one_valley}
\end{figure}

\begin{remark}
The idea of flatness driven motion along the manifold is similar to that in~\cite{li2021happens}, but our result is essentially different. We treat GD instead of SGD, and in our case, the motion along the manifold is made possible by the subquadratic landscape around the minima, instead of the SGD noise. The two types of flatness driven motion have quite different behaviors in some situation. For instance, the SGD noise drives the iterator to the flattest minimum on the manifold, while for GD it will converge after finding a sufficiently flat minimum (relative to the learning rate). Also, if the manifold consists of interpolation solutions, SGD will not show flatness driven motion because the noise vanishes at the minima. However, in our analysis movement still exists as long as the minima have subquadratic property. 
\end{remark}

\begin{remark}
The quasistatic approach also serves as a handy tool to derive the noise-driven motion along manifold for SGD. As a simple illustration, consider an objective function $f(\bx,\by)=\by^T H(\bx) \by$, which gives a quadratic approximation of a loss function with a global minima manifold $\{\by=0\}$. $H(\bx)$ always gives a positive definite matrix. Suppose an SGD is approximated by an SDE
\begin{equation*}
d\left[\begin{array}{c}\bx_t\\\by_t\end{array}\right] = - \left[\begin{array}{c}\by_t\nabla H(\bx_t)\by_t\\2H(\bx_t)\by_t\end{array}\right]dt + \left[\begin{array}{c} 0 \\ \sigma\sqrt{\eta H(\bx_t)}dW_t \end{array}\right].
\end{equation*}
Here we assume for convenience there is no noise along the manifold direction, and the noise on the $\by$ direction depends on the flatness of the minimum. Then, by assuming $\by$ is always at equilibrium, we first solve the dynamics of $\by$ fixing $\bx=\bx_t$. The equilibrium is
\begin{equation*}
    \by_\infty \sim \mathcal{N}(0, \frac{\eta\sigma^2}{4}I).
\end{equation*}
Plugging the equilibrium above into the dynamics of $\bx$, and taking expectation over $\by$, we obtain the expected dynamics of $\bx$ in quasistatic case:
\begin{equation*}
    \dot{\bx}_t = -\EE \by_\infty^T \nabla H(\bx_t) \by_\infty = -\frac{\eta\sigma^2}{4} \nabla\tr(H(\bx_t)),
\end{equation*}
which recovers the results in~\cite{li2021happens}.
This quasistatic approach can be easily adapted to other types of noise and other optimizers such as SGD with momentum.
\end{remark}

\section{The separate scales and learning rate decay}\label{sec:multiscale}

Learning rate decay is a widely adopted technique in training large scale neural networks, and has received much theoretical attention, too. Explanations of how LRD works include GD stability in different directions~\cite{lecun1990second}, SGD exploration~\cite{kleinberg2018alternative}, and pattern complexity~\cite{you2019does}. However, there are still some behaviors shown by training with LRD that cannot be well addressed by these explanations. For example, as shown in Figure~\ref{fig:eos}, the generalization performance suffers if the learning rate is decayed too early. In this section, we build a simple loss function, inspired by the observations of separate scales structure, that can explain this behavior of learning rate decay.

We will build a landscape with two valleys in different scales. To start with, consider a loss with a single valley in $\RR^2$, given by
\begin{equation*}
    g_1(x, y) := f(h(x)y).
\end{equation*}
Here, $f:\RR\rightarrow\RR$ is a function with subsquadratic growth a global minimum at $0$, $h:\RR\rightarrow\RR$ is a positive function controlling the flatness of the minimum. We assume $\argmin\{h(x)\}=0$. Hence, $(0,0)$ is the flattest minimum of $g$, among all minima with the form $(x,0)$. Next, we build another valley by scaling the $g_1$ above to a smaller scale, and rotating the valley to go through the $y$ direction. We obtain
\begin{equation*}
    g_2(x,y) = f(kh(y)x),
\end{equation*}
with $k>1$. Finally, we build a multiscale landscape with two valleys by considering
\begin{equation}\label{eqn:multiscaleF}
    F(x,y) = g_1(x,y) + \phi_c(g_2(x,y)),
\end{equation}
where $\phi_c$ is a non-decreasing cutoff function that confines the value of $g_2$ within $[0,c]$. This makes the effect of $g_2$ local. The simplest choice of $\phi_c$ can be $\phi_c(z)=\min\{z,c\}$. An example landscape of $F$ is shown in Figure~\ref{fig:two_valleys}. The landscape has one large and flat valley and one small and sharp valley. The global minimum is at the origin, locating at the bottom of both valleys.

Now we can study the behavior of GD with learning rate decay in the landscape of $F$. Since the valley of $g_2$ is sharp, a small learning rate is necessary for GD to converge. On the other hand, when initialized far away from the global minimum, the iterator will first be attracted by the large valley. GD will first converge to the neighborhood of the large valley, then move along the valley to its flattest region (near the global minimum in this case) while bouncing between the valley walls via the mechanism discussed in Section~\ref{ssec:manifold}. This moving process is faster with a larger learning rate. Therefore, an ideal strategy for learning rate decay is to use a large learning rate until the iterator moves to the vicinity of the small valley and then drop to a small learning rate to converge into the small valley. In this process, when the iterator is bouncing around and moving down the large valley, the loss value is not decreasing much. But that does not mean the learning rate can be decayed earlier. If it is decayed before the iterator is close enough to the small valley, it then has to move down the large valley using the small learning rate, which can cost much more time. In the extreme case, a small learning rate can cause convergence to a suboptimal minimum on the large valley. 

An numerical example with the form~\eqref{eqn:multiscaleF} is given in Figure~\ref{fig:two_valleys}. In the experiment, we initialize GD at a point far from the small valley with a large learning rate. Afterwards, the learning rate is decayed by a same factor at different steps. Although the loss values when the learning rate is decayed are similar, the three trajectories take drastically different amount of time to converge. 

\begin{figure}[h!]
    \centering
    \includegraphics[width=0.32\textwidth,height=0.25\textwidth]{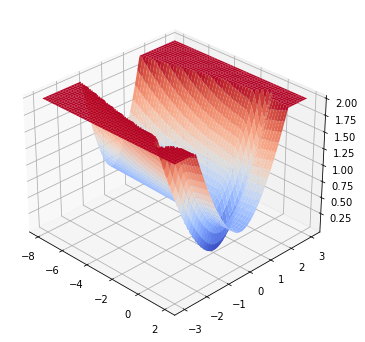}
    \includegraphics[width=0.32\textwidth,height=0.25\textwidth]{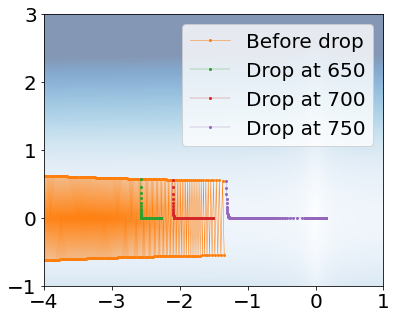}
    \includegraphics[width=0.32\textwidth,height=0.25\textwidth]{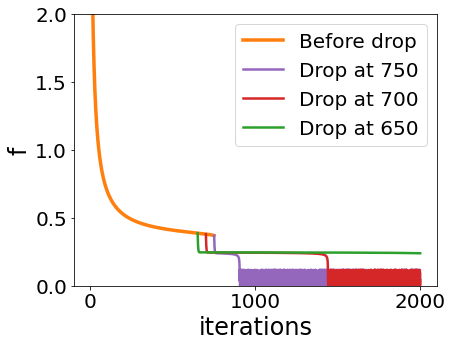}
    \caption{Learning rate decays at different steps lead to very different behavior after decaying. GD is initialized at $\bx_0=(-20,3)$ with learning rate $2.5$, and then decayed to $0.25$ at step $650$, $700$ or $750$. The loss function is $f(x_1, x_2)=h(x_1,x_2)+0.25h(x_2,10x_1)/(1+h(x_2,10x_1))$, with $h(x_1,x_2)=f_1((1+0.001x_1^2)x_2)$, where $f_1$ is defined in the caption of Figure~\ref{fig:traj_loss_1d2d}. {\bf (left)} The 2-D landscape of a loss function with two valleys in two scales. {\bf (middle)} The GD trajectories around and after learning rate decays. The ``before drop'' curve shows the GD trajectory from iteration 466 (when the $x$ coordinate gets bigger than $-4$) to iteration 750. The other three curves show the GD trajectores after learning rate decays. Each of them shows 500 iterations after the decay. {\bf (right)} The loss curves.}
    \label{fig:two_valleys}
\end{figure}

\section{The origin of the multiscale structure from training data}\label{sec:construction}

Both the subquadratic growth and separate scale loss structures can be understood as manifestations of multiscale structure---finite significant scales for separate scale loss and a continuum of scales for subquadratic growth. In this section, we study the origin of the loss's multiscale structure. By a simple neural network based construction, we show that the non-convexity of the model and the multiscale structure of the training data together act as one cause of the multiscale loss. 

Now we describe our construction. Consider the following two-layer neural network with $3$ neurons,
\begin{equation}\label{eqn:model1}
    f(x,w) = 1 -\sigma(wx+1) + 2\sigma(wx) -\sigma(wx-1),
\end{equation}
where $x$ is the input and $w$ is the parameter. Let $\sigma$ be the ReLU function. In this problem we assume the three neurons share the parameter $w$, so there is only one parameter. Suppose we have data $\{(x_i, y_i)\}_{i=1}^n$, with $x_i>0$ and $y_i=0$ for all $i$. The loss function is
\begin{equation*}
    L(w) = \frac{1}{n}\sum\limits_{i=1}^n f(x_i,w)^2.
\end{equation*}
It is easy to show that, for fixed $x$, $f(x,w)$ is the following piecewise linear function for $w$:
\begin{equation*}
f(x,w) = \left\{ \begin{array}{ll}
    1 & \textrm{if}\ w\leq-\frac{1}{x} \\
    |xw| & \textrm{if}\ -\frac{1}{x}\leq w\leq\frac{1}{x} \\
    1 & \textrm{if}\ w>\frac{1}{x}.
\end{array} \right.
\end{equation*}
Consequently, each term in the loss function is
\begin{equation*}
f(x_i,w)^2 = \left\{ \begin{array}{ll}
    1 & \textrm{if}\ w\leq-\frac{1}{x_i} \\
    x_i^2w^2 & \textrm{if}\ -\frac{1}{x_i}\leq w\leq\frac{1}{x_i} \\
    1 & \textrm{if}\ w>\frac{1}{x_i}.
\end{array} \right.
\end{equation*}
This is a function which is quadratic around $0$, and takes constant when $w$ is away from $0$. Moreover, the width of the quadratic segment depends on the magnitude of $x$. For bigger $x$, the quadratic part is narrow while sharp. For smaller $x$, it is wide and flat. From these properties, it is easy to show that the total loss $L$ gets sharper for $w$ closer to $0$. When $x_i$'s vary a lot in their magnitudes, $L$ will have multi-scale structure---the sharpness increases by orders of magnitudes as $w$ moves towards 0. 

To be clearer, assume without loss of generality that $x_1\leq x_2\leq\cdots\leq x_n$, then $L(w)$ is the following piecewise quadratic/constant function:
\begin{equation*}
L(w) = \left\{ \begin{array}{ll}
     \left(\frac{1}{n}\sum\limits_{i=1}^n x_i^2\right)w^2 & \textrm{if}\ |w|\leq\frac{1}{x_n} \\
     \left(\frac{1}{n}\sum\limits_{i=1}^k x_i^2\right)w^2 + \frac{n-k}{n} & \textrm{if}\ \frac{1}{x_{k+1}}\leq|w|\leq\frac{1}{x_{k}},\ \textrm{for}\ k=1,2,...,n-1 \\
     1 & \textrm{if}\ |w|>\frac{1}{x_1}.
\end{array} \right.
\end{equation*}
If we have a continuum of $x$, sampled from a probability distribution $\mu$ supported on $(0,\infty)$, we can also write down the ``population'' loss function:
\begin{equation*}
L(w) = \int f(x,w)^2 d\mu(x) = \left(\int_0^{1/|w|}x^2d\mu(x)\right)w^2 + \int_{1/|w|}^\infty d\mu(x).
\end{equation*}

If the training data contains $3$ data points with different magnitude, the empirical loss function looks like the blue curve in Figure~\ref{fig:ms_construction} (left). We obtain a loss with separate scales. 
On the other hand, if $\mu$ is a uniform distribution on $[0,C]$, then the population loss is 
\begin{equation}\label{eqn:fcn_integral}
    L(w) = \left\{\begin{array}{ll}
        \frac{C^2}{3}w^2 & \textrm{if}\ |w|\leq\frac{1}{C} \\
        1-\frac{2}{3C|w|} & \textrm{if}\ |w|>\frac{1}{C}.
    \end{array} \right.
\end{equation}
The curve is shown in orange in Figure~\ref{fig:ms_construction} (left). Now, we obtain a loss with continuous scales and thus shows subquadratic growth. Similar to the population loss, if the training data set is very large, the empirical loss function will also have (nearly) continuous scales. 

\begin{figure}[h!]
    \centering
    \includegraphics[width=0.38\textwidth,height=0.3\textwidth]{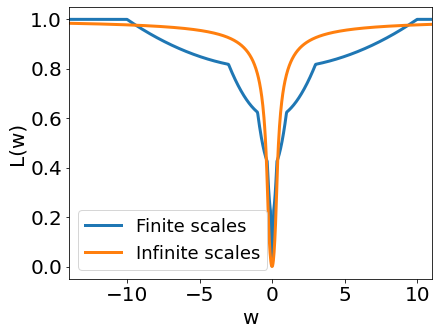}
    \includegraphics[width=0.42\textwidth,height=0.3\textwidth]{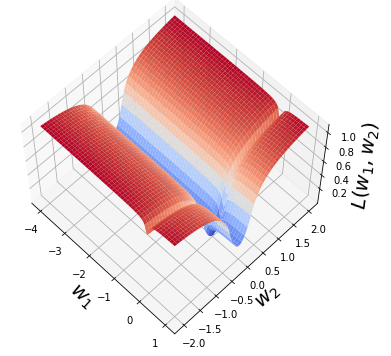}
    \caption{{\bf (left)} Example loss landscapes given by constructions in Section~\ref{sec:construction}. The curves of functions of the parameter $w$. {\bf (right)} A 2-D loss function with two valleys in different scales constructed by the 2 dimensional problem in Section~\ref{ssec:ms_construction}. The surface shows a function of the parameters $(w_1, w_2)$.}
    \label{fig:ms_construction}
\end{figure}

\paragraph{Multi-dimensional examples.}\label{ssec:ms_construction}
With similar approach, we can also construct multiscale losses in multi-dimensional spaces, especially multiscale valleys we studied in Section~\ref{sec:multiscale}. 
Recall the definition of $f(x,w)$ above. To produce a 2-D loss landscape with two valleys, we just need to consider a model with a two-dimensional input $(x^{(1)},x^{(2)})$, two parameters $w_1,w_2$, and a two-dimensional output:
\begin{equation*}
   [f(x^{(1)},w_1),\ \ f(x^{(2)},w_2)]^T.
\end{equation*}
Then, consider data $\{(x^{(1)}_i,x^{(2)}_i, y^{(1)}_i, y^{(2)}_i)\}_{i=1}^n$ with $y^{(1)}_i = y^{(2)}_i =0$ for all $i$. The loss function becomes
\begin{equation*}
    L(w_1,w_2) = \frac{1}{n}\sum\limits_{i=1}^n f(x_i^{(1)}, w_1)^2 + f(x_i^{(2)}, w_2)^2.
\end{equation*}
The loss function is a superposition of two valleys along $w_1$ and $w_2$ directions. If we assume $x^{(2)}$ is bigger than $x^{(1)}$ (e.g. $x^{(2)_i}=10x^{(1)}_i$ for any $i$), then the valley along $w_1$ direction (generated by landscape of $w_2$) is in a smaller scale than the other valley. An example using a continuous distribution of training data is given in Figure~\ref{fig:ms_construction}.

\paragraph{Discussion on homogeneity.}
Many factors can contribute to the special structure of neural network's loss functions. In the construction in this section, we focus on the distribution of training data. We show that if the training data are not well normalized, multiscale structure will appear in the loss landscape. Besides the training data, nonconvexity of the model and the loss also plays an important role in this example. If the loss for each data is convex or even quadratic, then the total loss as a sum of many single losses will not show very rich structures. 

Although in practice the input data is always standardized before being fed into the network, we note that usual standardization applies a fixed transform to all data to achieve zero mean and identity variance, but does not eliminate the multiscale structure in the data. After standardization, the length of different data can still differ by several orders of magnitude. Hence, it is still possible for training data to cause multiscale structure in the loss function. 

Finally, we show that for large deep neural networks, multiscale data can still cause multiscale
loss due to the homogeneity of ReLU function. For example, consider an L-layer fully connected
neural networks with ReLU activation function and without bias:
\begin{equation*}
    f(\bx; W_1, W_2, ..., W_L) = W_L\sigma(W_{L-1}\sigma(\cdots W_2\sigma(W_1\bx)\cdots)).
\end{equation*}
Let $l_1$ and $l_2$ be the losses of two input data $\bx_1$ and $\bx_2$, both with the same target $y$. Then, if $\bx_2=k\bx_1$, we can easily verify
\begin{equation*}
    l_1(W_1, ..., W_L) = l_2(k^{-1/L}W_1, ..., k^{-1/L}W_L),
\end{equation*}
i.e. the two losses have the same shape but different scales. Such relation is not unique. For instance, fixing $W_2, ..., W_L$, we have
\begin{equation*}
    l_1(W_1, ..., W_L) = l_2(k^{-1}W_1, W_2, ..., W_L). 
\end{equation*}
In this case, $l_2$ is a scaling of $l_1$ only in the $W_1$ space. The second relation above is true even for networks with bias.

\section{Summary}\label{sec:summary}

In this paper, we study the limitations of using the quadratic approximation for neural network's loss functions and highlight the importance of a multiscale structure. Firstly, we empirically observe two manifestations of the multiscale structure---the subquadratic growth and the separate scales structure. These properties can explain some intriguing phenomena observed during the training process of neural networks. Specifically, we explain (1) the edge of stability phenomenon of GD using the subquadratic growth and (2) the behavior and effect of learning rate decay using the separate scales structure. Then, we study the origin of the multiscale structures, and show by constructive examples that non-convex models and non-uniform training data can lead to multiscale loss. It is worth noting that our study puts more focus on GD due to its simplicity. Extending the study to SGD is an important and meaningful direction for future work.

\clearpage
\bibliography{ref}

\begin{thebibliography}{10}

\bibitem{ahn2022understanding}
Kwangjun Ahn, Jingzhao Zhang, and Suvrit Sra.
\newblock Understanding the unstable convergence of gradient descent.
\newblock {\em arXiv preprint arXiv:2204.01050}, 2022.

\bibitem{allen2019convergence}
Zeyuan Allen-Zhu, Yuanzhi Li, and Zhao Song.
\newblock A convergence theory for deep learning via over-parameterization.
\newblock In {\em International Conference on Machine Learning}, pages
  242--252. PMLR, 2019.

\bibitem{blanc2020implicit}
Guy Blanc, Neha Gupta, Gregory Valiant, and Paul Valiant.
\newblock Implicit regularization for deep neural networks driven by an
  ornstein-uhlenbeck like process.
\newblock In {\em Conference on learning theory}, pages 483--513. PMLR, 2020.

\bibitem{chizat2019lazy}
Lenaic Chizat, Edouard Oyallon, and Francis Bach.
\newblock On lazy training in differentiable programming.
\newblock {\em Advances in Neural Information Processing Systems}, 32, 2019.

\bibitem{cohen2021gradient}
Jeremy~M Cohen, Simran Kaur, Yuanzhi Li, J~Zico Kolter, and Ameet Talwalkar.
\newblock Gradient descent on neural networks typically occurs at the edge of
  stability.
\newblock {\em arXiv preprint arXiv:2103.00065}, 2021.

\bibitem{dai2018towards}
Xiaowu Dai and Yuhua Zhu.
\newblock Towards theoretical understanding of large batch training in
  stochastic gradient descent.
\newblock {\em arXiv preprint arXiv:1812.00542}, 2018.

\bibitem{damian2021label}
Alex Damian, Tengyu Ma, and Jason~D Lee.
\newblock Label noise sgd provably prefers flat global minimizers.
\newblock {\em Advances in Neural Information Processing Systems}, 34, 2021.

\bibitem{dinh2017sharp}
Laurent Dinh, Razvan Pascanu, Samy Bengio, and Yoshua Bengio.
\newblock Sharp minima can generalize for deep nets.
\newblock In {\em International Conference on Machine Learning}, pages
  1019--1028. PMLR, 2017.

\bibitem{du2018gradient}
Simon~S Du, Xiyu Zhai, Barnabas Poczos, and Aarti Singh.
\newblock Gradient descent provably optimizes over-parameterized neural
  networks.
\newblock In {\em International Conference on Learning Representations}, 2018.

\bibitem{ma2019comparative}
Weinan E, Chao Ma, and Lei Wu.
\newblock A comparative analysis of the optimization and generalization
  property of two-layer neural network and random feature models under gradient
  descent dynamics.
\newblock {\em Science China Mathematics 63(7) 1235-1258}, 2019.

\bibitem{giladi2019stability}
Niv Giladi, Mor~Shpigel Nacson, Elad Hoffer, and Daniel Soudry.
\newblock At stability's edge: How to adjust hyperparameters to preserve minima
  selection in asynchronous training of neural networks?
\newblock {\em arXiv preprint arXiv:1909.12340}, 2019.

\bibitem{Jacot2018Neural}
Arthur Jacot, Franck Gabriel, and Cl{\'e}ment Hongler.
\newblock Neural tangent kernel: convergence and generalization in neural
  networks (invited paper).
\newblock {\em Proceedings of the 53rd Annual ACM SIGACT Symposium on Theory of
  Computing}, 2018.

\bibitem{kleinberg2018alternative}
Bobby Kleinberg, Yuanzhi Li, and Yang Yuan.
\newblock An alternative view: When does sgd escape local minima?
\newblock In {\em International Conference on Machine Learning}, pages
  2698--2707. PMLR, 2018.

\bibitem{kong2020stochasticity}
Lingkai Kong and Molei Tao.
\newblock Stochasticity of deterministic gradient descent: Large learning rate
  for multiscale objective function.
\newblock {\em Advances in Neural Information Processing Systems},
  33:2625--2638, 2020.

\bibitem{kunin2021limiting}
Daniel Kunin, Javier Sagastuy-Brena, Lauren Gillespie, Eshed Margalit, Hidenori
  Tanaka, Surya Ganguli, and Daniel~LK Yamins.
\newblock Limiting dynamics of sgd: Modified loss, phase space oscillations,
  and anomalous diffusion.
\newblock {\em arXiv preprint arXiv:2107.09133}, 2021.

\bibitem{lecun1990second}
Yann LeCun, Ido Kanter, and Sara Solla.
\newblock Second order properties of error surfaces: Learning time and
  generalization.
\newblock {\em Advances in neural information processing systems}, 3, 1990.

\bibitem{li2021happens}
Zhiyuan Li, Tianhao Wang, and Sanjeev Arora.
\newblock What happens after sgd reaches zero loss?--a mathematical framework.
\newblock {\em arXiv preprint arXiv:2110.06914}, 2021.

\bibitem{ma2021linear}
Chao Ma and Lexing Ying.
\newblock On linear stability of sgd and input-smoothness of neural networks.
\newblock {\em Advances in Neural Information Processing Systems}, 34, 2021.

\bibitem{skorokhodov2019loss}
Ivan Skorokhodov and Mikhail Burtsev.
\newblock Loss landscape sightseeing with multi-point optimization.
\newblock {\em arXiv preprint arXiv:1910.03867}, 2019.

\bibitem{wang2021large}
Yuqing Wang, Minshuo Chen, Tuo Zhao, and Molei Tao.
\newblock Large learning rate tames homogeneity: Convergence and balancing
  effect.
\newblock {\em arXiv preprint arXiv:2110.03677}, 2021.

\bibitem{wu2018sgd}
Lei Wu, Chao Ma, and Weinan E.
\newblock How sgd selects the global minima in over-parameterized learning: A
  dynamical stability perspective.
\newblock {\em Advances in Neural Information Processing Systems}, 31, 2018.

\bibitem{you2019does}
Kaichao You, Mingsheng Long, Jianmin Wang, and Michael~I Jordan.
\newblock How does learning rate decay help modern neural networks?
\newblock {\em arXiv preprint arXiv:1908.01878}, 2019.

\bibitem{zou2020gradient}
Difan Zou, Yuan Cao, Dongruo Zhou, and Quanquan Gu.
\newblock Gradient descent optimizes over-parameterized deep relu networks.
\newblock {\em Machine Learning}, 109(3):467--492, 2020.

\end{thebibliography}
\bibliographystyle{plain}

\end{document}